\newtheorem{theorem}{Theorem}[section]
\newtheorem{lemma}{Lemma}[section]
\newtheorem{definition}{Definition}[section]
\newtheorem{remark}{Remark}[section]
\begin{document}

\onehalfspacing

\title{A Robust Algorithm for Non-IID Machine Learning Problems with Convergence Analysis}

\author{Qing Xu\footnote{Huayuan Computing Technology(Shanghai) Co., Ltd.},~~ Xiaohua Xuan}

\date{}

\maketitle

\begin{abstract}
In this paper, we propose an improved numerical algorithm for solving minimax problems based on nonsmooth optimization, quadratic programming and iterative process. We also provide a rigorous proof of convergence for our algorithm under some mild assumptions, such as gradient continuity and boundedness. Such an algorithm can be widely applied in various fields such as robust optimization, imbalanced learning, etc.
\end{abstract}

\section{Introduction}

The classical machine learning framework relies on the assumption that the samples are independent and identically distributed (i.i.d.), which means that each sample has the same probability distribution as the others and all are mutually independent (see e.g. \cite{shen2021towards}). However, many real-world problems do not satisfy the i.i.d. assumption (e.g., when the data distribution changes over time or space, or when the samples are correlated with each other, etc., which may lead to biased or inconsistent estimators). Therefore, it is necessary to consider the problems that do not satisfy the i.i.d. assumption.

In~\cite{xu2019nonlinear}, We proposed a new framework for solving nonlinear regression problems without i.i.d. assumption. We formulated the nonlinear regression problem as a minimax problem with a max-mean loss function motivated by Peng~\cite{jin2016optimal}. 
$$\min_\theta \max_{1\leq j\leq N}\frac{1}{n_j}\sum_{l=1}^{n_j}(g^\theta(x_{jl}-y_{jl}))^2.$$

We then proposed a numerical algorithm to solve the above minimax problem. However, we did not give the convergence analysis of the algorithm.

In this paper, we propose a more efficient algorithm than the one in~\cite{xu2019nonlinear} and provide theoretical analysis on the convergence and the optimality conditions of the algorithm. Such an algorithm can be widely used in machine learning and deep learning problems.

\section{Preliminaries}

In this paper, we consider the following minimax problem
\begin{equation}\label{problem}
    \min_{x\in \mathbb{R}^n}\max_{1\leq j\leq N}f_j(x).
\end{equation}

In what follows, we always assume the following hypothesis.

\textbf{(H1)} There exists $M\in\mathbb{R}$ such that
$$f_j(x)\geq M,\quad \forall x\in \mathbb{R}^n,~1\leq j\leq N.$$

\textbf{(H2)} $f_j\in C^1(\mathbb{R}^n)$ and there exists a \textbf{modulus of continuity}\footnote{Recall that a modulus of continuity is an increasing function $w:[0,+\infty)\rightarrow [0,+\infty)$, vanishing at $0$ and continuous at $0$.} $w$ such that 
$$\|\nabla f_j(x)-\nabla f_j(y)\| \leq w( \|x-y\|),\quad \forall x,y\in \mathbb{R}^n .$$

Denote
$$ \Phi(x)=\max_{1\leq j\leq N}f_j(x)$$
and 
$$\Lambda(x)=\Big\{i|f_i(x) = \max_{1\leq j\leq N}f_j(x),i=1,2,\cdots,N \Big\}.$$

\bigskip

\begin{definition}
If for any direction $d\in \mathbb{R}^n$, the limit
$$\lim_{t\rightarrow 0+}\frac{g(x+td)-g(x)}{t}$$
exists, then we say $g$ is directional differentiable at $x$ and the directional derivative is denoted as
$$g'(x;d)=\lim_{t\rightarrow 0+}\frac{g(x+td)-g(x)}{t}.$$
\end{definition}

\bigskip

\begin{lemma}
For any direction $d\in \mathbb{R}^n$, the directional derivative of $\Phi$ exists and
$$\Phi'(x;d)=\max_{j\in\Lambda(x)}\langle \nabla f_j(x),d\rangle.$$
\end{lemma}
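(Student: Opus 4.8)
The plan is to compute the one-sided limit defining $\Phi'(x;d)$ by bounding the difference quotient $\frac{\Phi(x+td)-\Phi(x)}{t}$ from below and above, the two bounds meeting at $\max_{j\in\Lambda(x)}\langle\nabla f_j(x),d\rangle$. The decisive structural fact I would exploit is that for small $t>0$ the maximum at $x+td$ can only be realized by indices that are already active at $x$; once that is in hand the computation collapses to a finite-max-of-limits argument.

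First I would dispatch the lower bound, which is immediate. Fix any $i\in\Lambda(x)$, so $f_i(x)=\Phi(x)$. Since $f_i\in C^1$, Taylor expansion gives $f_i(x+td)=\Phi(x)+t\langle\nabla f_i(x),d\rangle+o(t)$, and using $\Phi(x+td)\ge f_i(x+td)$, dividing by $t>0$ and sending $t\to 0^+$ yields $\liminf_{t\to 0^+}\frac{\Phi(x+td)-\Phi(x)}{t}\ge\langle\nabla f_i(x),d\rangle$. Taking the maximum over $i\in\Lambda(x)$ gives the lower bound $\ge\max_{j\in\Lambda(x)}\langle\nabla f_j(x),d\rangle$.

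The matching upper bound rests on the step I expect to be the main obstacle: showing that indices outside $\Lambda(x)$ stay inactive for small $t$. Assuming $\Lambda(x)\neq\{1,\dots,N\}$ (the complementary case being trivial), set the gap $\delta=\Phi(x)-\max_{j\notin\Lambda(x)}f_j(x)>0$. By continuity of the finitely many $f_j$, pick $t_0>0$ with $|f_j(x+td)-f_j(x)|<\delta/2$ for all $j$ and all $0<t<t_0$. Then for $j\notin\Lambda(x)$ and any $i\in\Lambda(x)$ one gets $f_j(x+td)<\Phi(x)-\delta/2<f_i(x+td)$, so no inactive index attains the maximum, i.e. $\Lambda(x+td)\subseteq\Lambda(x)$ and hence $\Phi(x+td)=\max_{j\in\Lambda(x)}f_j(x+td)$ for $0<t<t_0$.

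Finally I would combine the pieces. For $0<t<t_0$, since $f_j(x)=\Phi(x)$ for every $j\in\Lambda(x)$,
$$\frac{\Phi(x+td)-\Phi(x)}{t}=\max_{j\in\Lambda(x)}\frac{f_j(x+td)-f_j(x)}{t}.$$
Each quotient on the right converges to $\langle\nabla f_j(x),d\rangle$ by differentiability of $f_j$, and because the maximum runs over the \emph{finite} set $\Lambda(x)$, the maximum of the quotients converges to the maximum of the limits. This shows the one-sided limit exists and equals $\max_{j\in\Lambda(x)}\langle\nabla f_j(x),d\rangle$, proving both existence of $\Phi'(x;d)$ and the stated formula at once. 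I note that plain $C^1$ regularity together with continuity suffices here; the modulus of continuity from (H2) is not needed for this lemma, though it would let one make the $o(t)$ term uniform in $j$ if a quantitative refinement were wanted.
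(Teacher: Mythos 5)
Your proof is correct, and its backbone---showing that for small $t>0$ no inactive index can attain the maximum, so that $\Phi(x+td)=\max_{j\in\Lambda(x)}f_j(x+td)$---is exactly the paper's key step as well. Where you part ways is the finishing move: the paper applies the mean value theorem to each $f_j$, $j\in\Lambda(x)$, and invokes the uniform modulus of continuity from (H2) to obtain the quantitative estimate $\left|\frac{\Phi(x+td)-\Phi(x)}{t}-\max_{j\in\Lambda(x)}\langle \nabla f_j(x),d\rangle\right|\leq w(t\|d\|)\|d\|$, recorded as inequality \eqref{ineq}; you instead pass to the limit qualitatively, using that the maximum over the \emph{finite} set $\Lambda(x)$ of difference quotients converges to the maximum of their limits (equivalently, $|\max_j a_j-\max_j b_j|\leq\max_j|a_j-b_j|$). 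Your route is more elementary and validates your closing observation: plain $C^1$ regularity does suffice for the lemma itself, and (H2) is not needed here. What the paper's heavier argument buys is precisely the uniform bound \eqref{ineq}, which is not incidental---it is highlighted in a remark and reused in the proof of Theorem~\ref{th4} to show that the backtracking line search terminates in finitely many steps, a place where a rate-free, purely qualitative version of the lemma would leave more work to do. One small economy: your lower-bound paragraph is redundant, since once the localization $\Phi(x+td)=\max_{j\in\Lambda(x)}f_j(x+td)$ is established, your final finite-max-of-limits computation already yields both bounds, and hence existence of the directional derivative, in a single stroke---as you yourself note at the end.
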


\begin{proof}
	For any $i\in \Lambda(x)$ and $j\notin \Lambda(x)$, 
	$$f_j(x)<f_i(x).$$
	Therefore, there exists $\delta >0$ such that for $\|y-x\|<\delta$,
	$$f_j(y)<f_i(y).$$
	Hence, for $\|y-x\|<\delta$,
	$$\Phi(y)=\max_{j\in \Lambda(x)}f_j(y).$$
	So for sufficiently small $t>0$, we have $x+td\in B(x;\delta)$ and
	\begin{align*}
	\Phi(x+td)-\Phi(x)&=\max_{j\in \Lambda(x)}f_j(x+td)-\max_{k\in \Lambda(x)}f_k(x)\\
		&=\max_{j\in \Lambda(x)}\left(f_j(x+td)-\max_{k\in \Lambda(x)}f_k(x)\right)\\
		&=\max_{j\in \Lambda(x)}(f_j(x+td)-f_j(x))\\
		&=\max_{j\in \Lambda(x)}\langle \nabla f_j(x+\theta_j td),td\rangle .
	\end{align*}
 
	Here, $\theta_j\in[0,1]$. Hence,
	\begin{align*}
		&\left|\frac{\Phi(x+td)-\Phi(x)}{t}-\max_{j\in \Lambda(x)}\langle \nabla f_j(x),d\rangle \right|\\
          =&\left|\max_{j\in \Lambda(x)}\langle \nabla f_j(x+\theta_j td),d\rangle  -\max_{j\in \Lambda(x)}\langle \nabla f_j(x),d\rangle \right| \\
          \leq &\left|\max_{j\in \Lambda(x)}\langle \nabla f_j(x+\theta_j td)-\nabla f_j(x),d\rangle   \right| \\
          \leq & w(\theta_j t\|d\|) \|d\| \\
          \leq & w( t\|d\|) \|d\|.
	\end{align*}
        
	Note that $\lim\limits_{t\rightarrow 0+} w( t\|d\|)=0$, we have that
	$$\Phi'(x;d)=\lim_{t\rightarrow 0+}\frac{\Phi(x+td)-\Phi(x)}{t}=\max_{j\in\Lambda(x)}\langle \nabla f_j(x),d\rangle.$$

\end{proof}
\begin{remark}
    In the above proof, we establish an inequality which is useful in the following sections.
    \begin{equation}\label{ineq}
    \left|\frac{\Phi(x+td)-\Phi(x)}{t}-\max_{j\in \Lambda(x)}\langle \nabla f_j(x),d\rangle \right|\leq  w( t\|d\|) \|d\|.
    \end{equation}

\end{remark}

\begin{lemma}
If $F$ is directional differentiable at $x$ for any direction $d$ and $F$ attain its minimum at $x$, then
\begin{equation}\label{e1}
F'(x;d)\geq 0,~\forall d\in\mathbb{R}^n.
\end{equation}
\end{lemma}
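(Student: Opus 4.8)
The plan is to apply the definition of a local (here global) minimizer directly to the difference quotient that defines the directional derivative. Since $F$ attains its minimum at $x$, we have $F(y)\geq F(x)$ for every $y\in\mathbb{R}^n$. Fixing an arbitrary direction $d\in\mathbb{R}^n$ and taking $y=x+td$ for $t>0$, minimality gives $F(x+td)-F(x)\geq 0$.

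Next I would divide this inequality by $t$. Because $t>0$, the sign is preserved, so the difference quotient satisfies
$$\frac{F(x+td)-F(x)}{t}\geq 0$$
for every $t>0$. By the hypothesis that $F$ is directionally differentiable at $x$ in the direction $d$, the limit of the left-hand side as $t\to 0+$ exists and equals $F'(x;d)$. The final step is to pass to the limit, using the elementary fact that a non-strict inequality is preserved under limits: a limit of nonnegative quantities is nonnegative. This yields $F'(x;d)\geq 0$, and since $d$ was arbitrary, the conclusion \eqref{e1} holds for all $d\in\mathbb{R}^n$.

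There is no substantive obstacle here; this is the standard first-order necessary condition for a minimum, transcribed to the directional setting. The only point requiring any care is to restrict to $t>0$ (so that dividing preserves the inequality and the one-sided limit is the relevant one), and to invoke the existence of the directional derivative, which is assumed rather than proved, so that the limit is guaranteed to exist before we assert its sign.
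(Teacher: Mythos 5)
Your proof is correct and follows essentially the same route as the paper's: apply minimality to get $F(x+td)-F(x)\geq 0$ for $t>0$, divide by $t$, and pass to the one-sided limit whose existence is guaranteed by the directional differentiability hypothesis. Your added remarks about restricting to $t>0$ and invoking the assumed existence of the limit are exactly the right points of care, though the paper leaves them implicit.
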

\begin{proof}
	For any $d$ and sufficiently small $t>0$, since $F$ attain its minimum at $x$, we have that
	$$F(x+td)-F(x)\geq 0\Rightarrow \frac{F(x+td)-F(x)}{t}\geq 0.$$
	Thus,
	$$F'(x;d)\geq 0.$$
\end{proof}
\begin{remark}
If \eqref{e1} holds, then $x$ is called a stationary point of $F$. If $F$ is convex and $x$ is a stationary point of $F$, then $F$ attain its minimum at $x$~(see e.g. \cite{dem1990introduction}).
\end{remark}

\section{Algorithm}

In this section, we formulate the main algorithm for solving problem \eqref{problem}.

\renewcommand{\thealgorithm}{X}
\begin{algorithm}[H]
    \caption{Main Algorithm}
    \begin{algorithmic}[1]
        \State \textbf{Initialization.} Set $k=0$, $x_0=0$, $\varepsilon = 10^{-8}$, $\delta=10^{-7}$, $c=0.5$, $\sigma=0.5$
        \While{true}
        \State$$\text{Set}\quad G=\nabla f(x_k)\in\mathbb{R}^{N\times n},~f=(f_1(x_k),\cdots,f_N(x_k))^T$$
        \State Suppose $\lambda$ is the solution of the following QP with gap tolerance $\delta$: \label{line1}
        $$\min_{\lambda}\left(\frac{1}{2}\lambda^TGG^T\lambda -f^T\lambda \right)$$
        $$\mathrm{s.t.}~\sum_{i=1}^{N}\lambda_i =1,\lambda_i\geq0 $$
        \State Set $p_k=-G^T\lambda$
        \If{$p_k =0$}
            \State Set $d_k=0$
       \Else
          \State Set $d_k=\frac{p_k}{\|p_k\|}$
       \EndIf

        \State Set $j=0$

        \While{true}\label{line2}
       
        \State Set $\alpha=\sigma^j$
        \If{$d_k=0$}
        \State \textbf{break}
        \EndIf
        \If{$\Phi(x_k+\alpha d_k)<\Phi(x_k)+c\alpha\Phi'(x_k;d_k)$}
            \State \textbf{break}
        \Else
        \State $j\leftarrow j+1$
        \EndIf
        \EndWhile\label{line3}
        \State Set $\alpha_k = \alpha$, $x_{k+1} = x_k  +\alpha_k d_k$
        \State $k\leftarrow k+1$
        
        \EndWhile
    \end{algorithmic}

\end{algorithm}

\begin{remark}
    The QP (quadratic programming) problem in line \ref{line1} can be solved by interior method, active set method, etc (see e.g. \cite{wright1999numerical}).
\end{remark}

\begin{remark}
We will show in the following sections that the \textbf{While} part (line \ref{line2} to line \ref{line3}) will terminate in finite steps. Thus, the above algorithm generate a finite sequence $\{x_k\}$.
\end{remark}

\begin{remark}
Algorithm X also works on minimax problems with other loss functions such as cross-entropy loss.
\end{remark}

\section{Convergence Analysis}

In this section, we provide the main convergence results of this paper. For the sake of simplicity, we will formulate the main results for a fixed $k\in\mathbb{N}$ and recall that

$$G=\nabla f(x_k)\in \mathbb{R}^{N\times n},f=(f_1(x_k),\cdots,f_N(x_k))^T.$$

\begin{theorem}\label{th1}
	If $\lambda$ is the solution of the following QP problem \eqref{dual1}$-$\eqref{dual2}:
	\begin{equation}\label{dual1}
	\min_\lambda \left(\frac{1}{2}\lambda^TGG^T\lambda -f^T\lambda \right)
	\end{equation}
	\begin{equation}\label{dual2}
	\mathrm{s.t.}~\sum_{i=1}^{N}\lambda_i =1,\lambda_i\geq0.
	\end{equation}
	Then $p=-G^T\lambda$ is the solution of problem \eqref{prim}$-$\eqref{rest}.
	\begin{equation}\label{prim}
		\min_{p,a}\quad \left(\frac{1}{2}\|p\|^2+a\right)
		\end{equation}
		\begin{equation}\label{rest}
		\text{s.t.}~f_j(x_k)+\langle \nabla f_j(x_k),p\rangle\leq a,~\forall~1\leq j\leq N.
		\end{equation}
\end{theorem}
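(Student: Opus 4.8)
The plan is to recognize the two problems as a Lagrangian dual pair and to derive the stated correspondence from the KKT stationarity conditions, exploiting convexity to turn those conditions into a genuine optimality certificate. First I would observe that the primal problem \eqref{prim}--\eqref{rest} is a convex program: the objective $\frac12\|p\|^2 + a$ is convex in $(p,a)$ (strictly convex in $p$, affine in $a$) and all $N$ constraints $f_j(x_k) + \langle\nabla f_j(x_k),p\rangle \le a$ are affine in $(p,a)$. Writing the constraints compactly as $f + Gp \le a\mathbf{1}$ with $\mathbf{1} = (1,\dots,1)^T$, I would form the Lagrangian $L(p,a,\lambda) = \frac12\|p\|^2 + a + \langle\lambda,\, f + Gp - a\mathbf{1}\rangle$ with multipliers $\lambda \ge 0$.

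Next I would carry out the inner minimization of $L$ over the unconstrained primal variables $(p,a)$. Since $L$ is convex and differentiable in $(p,a)$, its infimum is characterized by stationarity: $\partial_p L = p + G^T\lambda = 0$ forces $p = -G^T\lambda$, and $\partial_a L = 1 - \sum_{i=1}^N \lambda_i = 0$ forces $\sum_i \lambda_i = 1$ (indeed, unless this equality holds the infimum over $a$ is $-\infty$, so the constraint $\sum_i\lambda_i=1$ is precisely what makes the dual function finite). Substituting $p = -G^T\lambda$ back into $L$ collapses it to $-\frac12 \lambda^T G G^T\lambda + f^T\lambda$, so the dual problem---maximize this over $\lambda \ge 0$ with $\sum_i\lambda_i=1$---is, after flipping the sign, exactly the QP \eqref{dual1}--\eqref{dual2}.

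Finally I would invoke strong duality to transfer optimality from the dual back to the primal. Because the primal is convex with purely affine constraints and is clearly feasible (any $p$ together with a sufficiently large $a$ satisfies every constraint), the refined Slater condition for affine constraints holds, so there is no duality gap and the dual optimum is attained. The stationarity relation then pins down the primal optimizer: if $\lambda$ solves \eqref{dual1}--\eqref{dual2}, setting $p = -G^T\lambda$ and $a = \max_{1\le j\le N}\big(f_j(x_k) + \langle\nabla f_j(x_k),p\rangle\big)$ yields a feasible pair satisfying all KKT conditions---primal and dual feasibility, the two stationarity equations, and complementary slackness (which holds because $a$ equals the maximal constraint value, so every active multiplier multiplies a zero slack). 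Since the KKT conditions are sufficient for global optimality in a convex program, $(p,a)$ solves \eqref{prim}--\eqref{rest}, which proves the claim.

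I expect the main obstacle to be the rigorous justification of strong duality together with the clean recovery of the primal solution from the dual. The delicate point is that the primal objective is only affine (not strictly convex) in the variable $a$, so finiteness of the dual rests entirely on the normalization $\sum_i\lambda_i = 1$; and one must confirm that the feasibility/complementary-slackness choice of $a$ is consistent with the multiplier $\lambda$ produced by the dual. An alternative that sidesteps the abstract duality theorem is to verify the KKT system directly and appeal only to its sufficiency for convex problems, which I would present as a self-contained fallback if preferred.
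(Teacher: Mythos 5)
Your route is genuinely different from the paper's. The paper rewrites \eqref{prim}--\eqref{rest} as $\min_{p,a}\max_{\lambda\ge 0}L(p,a;\lambda)$ and invokes Sion's minimax theorem to swap $\min$ and $\max$, after which the same inner minimization you perform (the $1-\lambda^Te$ finiteness observation, then $p=-G^T\lambda$) yields the QP; the proof then stops, leaving the transfer of optimality from the dual $\lambda$ back to a primal solution implicit. You instead justify strong duality by the refined Slater condition (automatic here since all constraints are affine and the problem is feasible) and then recover the primal optimizer explicitly by exhibiting $(p,a)$ with $a=\max_j\bigl(f_j(x_k)+\langle\nabla f_j(x_k),p\rangle\bigr)$ and checking the KKT system, whose sufficiency under convexity closes the argument. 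This buys something real: your version makes precise exactly the step the paper glosses over, and it avoids appealing to a minimax theorem where elementary convex duality suffices.

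There is, however, one soft spot, which you partially flagged yourself: your justification of complementary slackness is as written circular. Choosing $a$ as the maximal constraint value guarantees zero slack only for the \emph{maximizing} indices $j$; it does not by itself show that every $j$ with $\lambda_j>0$ attains that maximum, which is what complementary slackness requires. This must be extracted from the optimality conditions of the dual QP itself: if $\lambda$ solves \eqref{dual1}--\eqref{dual2}, then there exist a multiplier $\nu\in\mathbb{R}$ for the simplex constraint and $s\ge 0$ with $GG^T\lambda-f+\nu\mathbf{1}-s=0$ and $s_j\lambda_j=0$. Since $GG^T\lambda=-Gp$, this reads
\begin{equation*}
f_j(x_k)+\langle\nabla f_j(x_k),p\rangle=\nu \ \text{ whenever } \lambda_j>0, \qquad
f_j(x_k)+\langle\nabla f_j(x_k),p\rangle\le\nu \ \text{ whenever } \lambda_j=0,
\end{equation*}
so $\nu=\max_j\bigl(f_j(x_k)+\langle\nabla f_j(x_k),p\rangle\bigr)=a$, and every positive $\lambda_j$ indeed multiplies a zero slack. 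With this patch (which is exactly your proposed ``verify the KKT system directly'' fallback made concrete), your proof is complete and, on the recovery step, more careful than the paper's own.
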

	
\begin{proof}
	Consider the Lagrangian
	$$L(p,a;\lambda)=\frac{1}{2}\|p\|^2+a+\sum_{j=1}^{N}\lambda_j(f_j(x_k)+\langle \nabla f_j(x_k),p\rangle - a).$$	
	It is easy to verify that problem (\ref{prim})$-$(\ref{rest}) is equivalent to the following minimax problem.
	$$\min_{p,a}\max_{\lambda \geq 0}L(p,a;\lambda).$$	
	Since $L(\cdot,\cdot;\lambda)$ is convex and $L(p,a;\cdot)$ is linear, by Sion's minimax theorem~\cite{sion1958general}, we have that
	$$\min_{p,a}\max_{\lambda \geq 0}L(p,a;\lambda)=\max_{\lambda \geq 0}\min_{p,a}L(p,a;\lambda).$$	
	Set $e=(1,1,\cdots,1)^T$, the above problem is equivalent to
	\begin{equation}\label{maxmin}
	    \max_{\lambda \geq 0}\min_{p,a}\left(\frac{1}{2}\|p\|^2+a+\lambda^T(f+Gp-ae)\right).
	\end{equation}
 
	Note that
	\begin{equation*}
		\frac{1}{2}\|p\|^2+a+\lambda^T(f+Gp-ae)=\frac{1}{2}\|p\|^2+\lambda^T(f+Gp)+a(1-\lambda^Te).
	\end{equation*}	
	If $1-\lambda^Te\neq 0$, then the inner minimum of \eqref{maxmin} is $-\infty$. Thus, we must have $1-\lambda^Te = 0$ when the outer maximum is attained. The problem is converted to
            \begin{equation}\label{simpleform}
	\max_{\lambda_i \geq 0,\sum\limits_{i=1}^N\lambda_i=1}\min_{p}\left(\frac{1}{2}\|p\|^2+\lambda^TGp+\lambda^Tf\right).
            \end{equation}
	The inner minimum of \eqref{simpleform} is achieved when $p=-G^T\lambda$ and the above problem is reduced to
	$$\min_\lambda \left(\frac{1}{2}\lambda^TGG^T\lambda -f^T\lambda \right)$$
	$$\mathrm{s.t.}~\sum_{i=1}^{N}\lambda_i =1,\lambda_i\geq0.$$
        Thus, we finish the proof.
\end{proof}

Consider the following optimization problem

\begin{equation}\label{kk}
	\min_{p\in\mathbb{R}^n}\left\{\max_{1\leq j\leq N}\{f_j(x_k)+\langle \nabla f_j(x_k),p\rangle\}+\frac{1}{2}\|p\|^2\right\},
\end{equation}
It is obvious that problem \eqref{kk} is equivalent to problem (\ref{prim})$-$(\ref{rest}).

\begin{theorem}\label{th2}
If $\lambda$ is the solution of problem \eqref{dual1}$-$\eqref{dual2}, and $p=-G^T\lambda$. Then
	$$\Phi'(x_k;p)\leq -\frac{1}{2}\|p\|^2.$$ 
\end{theorem}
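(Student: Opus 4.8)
The plan is to combine the variational formula for $\Phi'$ from Lemma 2.1 with the optimality of $p$ as a minimizer of the primal problem \eqref{prim}--\eqref{rest}, testing that minimizer against the trivial feasible point $p=0$. The whole argument is essentially a comparison of objective values.

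First I would invoke Theorem \ref{th1}: since $\lambda$ solves the dual QP \eqref{dual1}--\eqref{dual2}, the vector $p=-G^T\lambda$ solves \eqref{prim}--\eqref{rest}, equivalently \eqref{kk}. Let $a$ be the slack variable paired with $p$ in the optimal solution; minimizing $a$ subject to \eqref{rest} forces $a=\max_{1\le j\le N}\{f_j(x_k)+\langle\nabla f_j(x_k),p\rangle\}$, so in particular the feasibility inequalities $f_j(x_k)+\langle\nabla f_j(x_k),p\rangle\le a$ hold for every $j$. Next I would apply Lemma 2.1, which gives $\Phi'(x_k;p)=\max_{j\in\Lambda(x_k)}\langle\nabla f_j(x_k),p\rangle$. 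For any index $j\in\Lambda(x_k)$ we have $f_j(x_k)=\Phi(x_k)$ by definition of $\Lambda$, so the feasibility inequality becomes $\Phi(x_k)+\langle\nabla f_j(x_k),p\rangle\le a$, that is, $\langle\nabla f_j(x_k),p\rangle\le a-\Phi(x_k)$. Taking the maximum over $j\in\Lambda(x_k)$ yields $\Phi'(x_k;p)\le a-\Phi(x_k)$.

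The remaining step is to show $a-\Phi(x_k)\le-\tfrac12\|p\|^2$, and here the comparison with $p=0$ does the work. The pair $(p,a)=(0,\Phi(x_k))$ is feasible for \eqref{prim}--\eqref{rest}, since $f_j(x_k)+\langle\nabla f_j(x_k),0\rangle=f_j(x_k)\le\Phi(x_k)$ for all $j$, and its objective value is $\tfrac12\|0\|^2+\Phi(x_k)=\Phi(x_k)$. Because $(p,a)$ is optimal, $\tfrac12\|p\|^2+a\le\Phi(x_k)$, which rearranges to $a-\Phi(x_k)\le-\tfrac12\|p\|^2$. Combining this with the previous paragraph gives $\Phi'(x_k;p)\le-\tfrac12\|p\|^2$.

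I do not anticipate a serious obstacle; the only point requiring care is the bookkeeping that links the active set $\Lambda(x_k)$ of the original max-function to the linearized constraints of the QP, namely the use of $f_j(x_k)=\Phi(x_k)$ on $\Lambda(x_k)$ to convert the raw feasibility bound $a$ into a bound on $a-\Phi(x_k)$. (As an aside, complementary slackness for the QP, $\lambda_j(f_j(x_k)+\langle\nabla f_j(x_k),p\rangle-a)=0$, together with $\sum_j\lambda_j=1$ and $f_j(x_k)\le\Phi(x_k)$, gives $a=\sum_j\lambda_j f_j(x_k)-\|p\|^2\le\Phi(x_k)-\|p\|^2$ and hence the stronger estimate $\Phi'(x_k;p)\le-\|p\|^2$; but the stated inequality follows most transparently from the $p=0$ comparison above.)
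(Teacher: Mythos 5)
Your proposal is correct, and its first half follows a genuinely different route from the paper's, while the decisive second step coincides. Both arguments hinge on the same comparison: testing the optimal $p$ of \eqref{kk} (equivalently \eqref{prim}--\eqref{rest}, via Theorem~\ref{th1}) against the trivial feasible point $p=0$ to get $\max_{1\leq j\leq N}\{f_j(x_k)+\langle\nabla f_j(x_k),p\rangle\}+\tfrac{1}{2}\|p\|^2\leq\Phi(x_k)$. Where you differ is in how that bound reaches $\Phi'(x_k;p)$: you invoke the exact directional-derivative formula $\Phi'(x_k;p)=\max_{j\in\Lambda(x_k)}\langle\nabla f_j(x_k),p\rangle$ from the paper's first lemma, use $f_j(x_k)=\Phi(x_k)$ on the active set to obtain $\Phi'(x_k;p)\leq a-\Phi(x_k)$, and conclude purely algebraically. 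The paper never uses that formula here; instead it estimates the difference quotient $(\Phi(x_k+tp)-\Phi(x_k))/t$ directly via the mean value theorem and the modulus of continuity $w$, handling the inactive indices with the convex-combination trick $t\big(f_j(x_k)+\langle\nabla f_j(x_k),p\rangle-\Phi(x_k)\big)+(1-t)\big(f_j(x_k)-\Phi(x_k)\big)$ together with $f_j(x_k)\leq\Phi(x_k)$, and then lets $t\to 0^+$. Your version is shorter and more modular, reusing an already-proved lemma rather than re-deriving a derivative estimate; the paper's version yields in passing a quantitative finite-$t$ inequality, $\frac{\Phi(x_k+tp)-\Phi(x_k)}{t}\leq-\tfrac{1}{2}\|p\|^2+w(t\|p\|)\|p\|$, which is the style of estimate the paper leans on again in Theorems~\ref{th4} and~\ref{th5}. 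Your complementary-slackness aside is also sound: since the primal is a convex QP with affine constraints, KKT holds, and $a=\sum_j\lambda_jf_j(x_k)-\|p\|^2\leq\Phi(x_k)-\|p\|^2$ indeed gives the sharper bound $\Phi'(x_k;p)\leq-\|p\|^2$, which the paper does not record.
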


\begin{proof}
	For $0<t<1$,
	\begin{align*}
	&\Phi(x_k+tp)-\Phi(x_k)\\
	=&\max_{1\leq j\leq N}\{f_j(x_k+tp)-\Phi(x_k)\}\\
	=&\max_{1\leq j\leq N}\{f_j(x_k)+\langle \nabla f_j(x_k+\theta_jtp),tp\rangle -\Phi(x_k)\}\\
	=&\max_{1\leq j\leq N}\{f_j(x_k)+\langle \nabla f_j(x_k),tp\rangle -\Phi(x_k) + \langle \nabla f_j(x_k+\theta_jtp)-\nabla f_j(x_k),tp\rangle\}\\
        \leq& \max_{1\leq j\leq N}\{f_j(x_k)+\langle \nabla f_j(x_k),tp\rangle -\Phi(x_k)\} + \max_{1\leq j\leq N}\{\langle \nabla f_j(x_k+\theta_jtp)-\nabla f_j(x_k),tp\rangle\}\\
        \leq& \max_{1\leq j\leq N}\{f_j(x_k)+\langle \nabla f_j(x_k),tp\rangle -\Phi(x_k)\} + w(t\|p\|)t\|p\|\\
	=&\max_{1\leq j\leq N}\{t(f_j(x_k)+\langle \nabla f_j(x_k),p\rangle -\Phi(x_k))+(1-t)(f_j(x_k)-\Phi(x_k))\}+w(t\|p\|)t\|p\|\\
	&\quad\quad\quad \left(\text{Note that } f_j(x_k)\leq \Phi(x_k)=\max_{1\leq j\leq N}f_j(x_k)\right)\\
	\leq &t\max_{1\leq j\leq N}\{f_j(x_k)+\langle \nabla f_j(x_k),p\rangle -\Phi(x_k)\}+w(t\|p\|)t\|p\|.
	\end{align*}	
	Since $\lambda$ is the solution of problem \eqref{dual1}$-$\eqref{dual2}, $p$ is the solution of problem (\ref{prim})$-$(\ref{rest}), and therefore is also the solution of problem \eqref{kk}. We have that
	\begin{align*}
	&\max_{1\leq j\leq N}\left\{f_j(x_k)+\langle \nabla f_j(x_k),p\rangle+\frac{1}{2}\|p\|^2 \right\}\\
	\leq &\max_{1\leq j\leq N}\left\{f_j(x_k)+\langle \nabla f_j(x_k),0\rangle+\frac{1}{2}\|0\|^2 \right\}\\
	=& \max_{1\leq j\leq N}\{f_j(x_k)\}\\
	=&\Phi(x_k).
	\end{align*}	
	Therefore,
	\begin{align*}
	&\max_{1\leq j\leq N}\{f_j(x_k)+\langle \nabla f_j(x_k),p\rangle -\Phi(x_k)\}\leq -\frac{1}{2}\|p\|^2.\\
	\Rightarrow~ &\Phi(x_k+tp)-\Phi(x_k)\leq -\frac{1}{2}t\|p\|^2+w(t\|p\|)t\|p\|.\\
	\Rightarrow~&\frac{\Phi(x_k+tp)-\Phi(x_k)}{t}\leq -\frac{1}{2}\|p\|^2+w(t\|p\|)\|p\|.\\
	\Rightarrow~&\lim_{t\rightarrow0+}\frac{\Phi(x_k+tp)-\Phi(x_k)}{t}\leq -\frac{1}{2}\|p\|^2.
	\end{align*}
	Hence,
	$$\Phi'(x_k;p)\leq -\frac{1}{2}\|p\|^2.$$ 
\end{proof}

Next theorem states that if $d_k\neq 0$, then it is a descent direction for $\Phi$.
\begin{theorem}\label{th3}
	If $d_k\neq 0$, then
	$$\Phi'(x_k;d_k)<0.$$
\end{theorem}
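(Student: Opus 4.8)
The plan is to reduce this statement directly to Theorem \ref{th2} by exploiting the positive homogeneity of the directional derivative in its direction argument. First, I would observe that the hypothesis $d_k\neq 0$ forces $p_k\neq 0$: by construction the algorithm sets $d_k=0$ whenever $p_k=0$, so a nonzero $d_k$ means $\|p_k\|>0$ and $d_k = p_k/\|p_k\|$.

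Next I would invoke the explicit formula from Lemma 2.1, namely $\Phi'(x;d)=\max_{j\in\Lambda(x)}\langle\nabla f_j(x),d\rangle$. Since each inner product is linear in $d$ and the maximum runs over the fixed index set $\Lambda(x)$, which does not depend on $d$, the map $d\mapsto\Phi'(x;d)$ is positively homogeneous of degree one: for any scalar $c>0$ one has $\Phi'(x;cd)=c\,\Phi'(x;d)$. Applying this with $c=1/\|p_k\|$ and direction $p_k$ gives $\Phi'(x_k;d_k)=\frac{1}{\|p_k\|}\Phi'(x_k;p_k)$.

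Finally, Theorem \ref{th2} supplies the bound $\Phi'(x_k;p_k)\leq-\frac{1}{2}\|p_k\|^2$ (with $p=p_k=-G^T\lambda$). Combining the two relations yields
$$\Phi'(x_k;d_k)=\frac{1}{\|p_k\|}\Phi'(x_k;p_k)\leq-\frac{1}{2}\|p_k\|<0,$$
where strict negativity uses $\|p_k\|>0$. This completes the argument.

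Since every ingredient is already in hand, I do not anticipate a genuine obstacle. The only step requiring care is the homogeneity reduction: one should invoke the closed-form expression for $\Phi'$ from Lemma 2.1 rather than trying to rescale inside the difference quotient, because the defining limit is the one-sided limit $t\to 0+$ and the identity $\Phi'(x;cd)=c\,\Phi'(x;d)$ holds only for $c>0$. Since $1/\|p_k\|$ is positive, the scaling is legitimate and the conclusion follows at once.
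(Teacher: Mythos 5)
Your proof is correct and takes essentially the same route as the paper's: both write $d_k$ as a positive scalar multiple of $p_k$, use the explicit formula $\Phi'(x;d)=\max_{j\in\Lambda(x)}\langle\nabla f_j(x),d\rangle$ from Lemma 2.1 to pull the scalar $1/\|p_k\|$ out of the directional derivative, and then conclude strict negativity from Theorem \ref{th2} together with $p_k\neq 0$. Your write-up is simply more explicit than the paper's about why the positive-homogeneity step is legitimate.
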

\begin{proof}
	Since $d_k=\beta p$ with $\beta>0$. Hence,
	$$\Phi'(x_k;d_k)=\max_{j\in\Lambda(x_k)}\langle \nabla f_j(x),d_k\rangle =\beta \max_{j\in\Lambda(x_k)}\langle \nabla f_j(x),p\rangle=\beta \Phi'(x_k;p)<0.$$

\end{proof}

\begin{theorem}\label{th4}
	The \textbf{While} part (line \ref{line2} to line \ref{line3}) in the algorithm will terminate in finite steps.
\end{theorem}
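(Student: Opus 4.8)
The plan is to argue by cases on whether $d_k = 0$, and to recognize the inner \textbf{While} loop as a standard Armijo (backtracking) line search whose termination follows from the descent property already established in Theorem~\ref{th3} together with inequality~\eqref{ineq}. If $d_k = 0$, the loop hits the \textbf{break} on its first pass, so termination is immediate; hence I would dispose of this case in one line and concentrate on $d_k \neq 0$.

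Assume now $d_k \neq 0$. The key input is Theorem~\ref{th3}, which gives the strict descent estimate $\Phi'(x_k; d_k) < 0$. The loop sets $\alpha = \sigma^j$ with $0 < \sigma < 1$, so $\alpha \to 0+$ as $j \to \infty$, and the loop terminates precisely when the Armijo test
$$
\Phi(x_k + \alpha d_k) < \Phi(x_k) + c\,\alpha\,\Phi'(x_k; d_k)
$$
first succeeds. So it suffices to show this inequality holds for all sufficiently small $\alpha > 0$. First I would apply inequality~\eqref{ineq} from the earlier remark with $d = d_k$; since $d_k = p_k/\|p_k\|$ is a unit vector, $\|d_k\| = 1$, and recalling $\Phi'(x_k; d_k) = \max_{j \in \Lambda(x_k)}\langle \nabla f_j(x_k), d_k\rangle$ from Lemma~2.1, this yields the upper bound
$$
\frac{\Phi(x_k + \alpha d_k) - \Phi(x_k)}{\alpha} \leq \Phi'(x_k; d_k) + w(\alpha).
$$

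To finish, I would observe that the Armijo test rearranges to requiring the left-hand difference quotient to be strictly below $c\,\Phi'(x_k; d_k)$. Combining with the bound above, it is enough to guarantee
$$
\Phi'(x_k; d_k) + w(\alpha) < c\,\Phi'(x_k; d_k),
\qquad\text{i.e.}\qquad
w(\alpha) < (1-c)\bigl(-\Phi'(x_k; d_k)\bigr).
$$
Here the crucial slack is that $0 < c < 1$ and $\Phi'(x_k; d_k) < 0$, so the right-hand side is a \emph{fixed strictly positive} number. Because $w$ is a modulus of continuity with $w(\alpha) \to 0$ as $\alpha \to 0+$, there is an $\alpha^* > 0$ with $w(\alpha) < (1-c)(-\Phi'(x_k; d_k))$ for all $\alpha \in (0, \alpha^*)$; choosing $j$ large enough that $\sigma^j < \alpha^*$ makes the test succeed, so the loop exits after finitely many iterations.

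The argument is essentially routine, so there is no deep obstacle; the only point requiring care is the interplay between the strict inequality in the Armijo rule and the non-strict bound coming from~\eqref{ineq}. The reason this causes no difficulty is exactly the positive gap $(1-c)(-\Phi'(x_k; d_k))$ created by taking $c < 1$: this is what lets the vanishing modulus $w(\alpha)$ eventually fit underneath, converting the $\leq$ estimate into the required strict descent. I would also note explicitly that the use of $\|d_k\| = 1$ is what keeps the bound clean (otherwise a factor $\|d_k\|$ would appear), and that the whole analysis is for the fixed index $k$, consistent with the convention set at the start of the section.
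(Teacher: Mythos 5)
Your proposal is correct and follows essentially the same route as the paper's own proof: dispose of $d_k=0$ immediately, then use inequality~\eqref{ineq} together with the strict descent $\Phi'(x_k;d_k)<0$ from Theorem~\ref{th3} to show the Armijo test succeeds once $w(\sigma^j)$ falls below the fixed positive gap $(1-c)\bigl(-\Phi'(x_k;d_k)\bigr)$. Your explicit handling of the strict-versus-nonstrict inequality and the normalization $\|d_k\|=1$ is a slightly cleaner presentation of exactly the paper's argument.
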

\begin{proof}
        If $d_k=0$, then it terminates for one step. If $d_k\neq 0$, it suffices to show that for sufficiently small $t>0$, we have that
	$$\Phi(x_k+td_k)<\Phi(x_k)+ct\Phi'(x_k;d_k).$$
	In fact, according to \eqref{ineq}, 
	$$|\Phi(x_k+td_k)-\Phi(x_k)-t\Phi'(x_k;d_k) |\leq w( t\|d_k\|) t\|d_k\|.$$
	Hence,
	\begin{align*}
		\Phi(x_k+td_k)-\Phi(x_k) &\leq w( t\|d_k\|) t\|d_k\|+t\Phi'(x_k;d_k)\\
		&=ct\Phi'(x_k;d_k)+w( t\|d_k\|) t\|d_k\|+(1-c)t\Phi'(x_k;d_k)\\
		&=ct\Phi'(x_k;d_k)+t\left(w( t\|d_k\|) \|d_k\|+(1-c)\Phi'(x_k;d_k)\right).
	\end{align*}
	By Theorem \ref{th3}, $\Phi'(x_k;d_k)<0$. On the other hand, $\lim\limits_{t\rightarrow0}w( t\|d_k\|) \|d_k\|=0$. Thus for sufficiently small $t>0$, we have that
	$$w( t\|d_k\|) \|d_k\|+(1-c)\Phi'(x_k;d_k)<0.$$
	Therefore,
	$$\Phi(x_k+td_k)<\Phi(x_k)+ct\Phi'(x_k;d_k).$$
\end{proof}

\begin{theorem}\label{th5}
	Under \textbf{(H1)} and \textbf{(H2)}, we have that 
$$\lim_{k\rightarrow\infty}\Phi'(x_k;d_k)=0.$$

\end{theorem}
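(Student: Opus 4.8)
The plan is to exploit the Armijo-type acceptance rule of the inner loop to force a monotone, bounded decrease of $\Phi$ along the iterates, and then to convert this decrease into the stationarity conclusion by a contradiction argument driven by the modulus of continuity. First I would show that $\{\Phi(x_k)\}$ is monotonically decreasing and bounded below, hence convergent. Monotonicity comes from the acceptance criterion: since $x_{k+1}=x_k+\alpha_k d_k$ with $\alpha_k>0$ satisfies $\Phi(x_{k+1})<\Phi(x_k)+c\alpha_k\Phi'(x_k;d_k)$, and since $\Phi'(x_k;d_k)\le 0$ by Theorem \ref{th3} (with equality only when $d_k=0$), we obtain $\Phi(x_{k+1})\le\Phi(x_k)$; Theorem \ref{th4} guarantees each $\alpha_k$ is well defined. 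Boundedness below is immediate from \textbf{(H1)}, which gives $\Phi(x)\ge M$. A monotone sequence bounded below converges, so $\Phi(x_k)-\Phi(x_{k+1})\to 0$.

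Next I would extract the estimate that the line search directly controls. Rearranging the acceptance inequality gives $c\alpha_k\bigl(-\Phi'(x_k;d_k)\bigr)\le \Phi(x_k)-\Phi(x_{k+1})$, and since the right-hand side tends to $0$ while $-\Phi'(x_k;d_k)\ge 0$, we conclude $\alpha_k\Phi'(x_k;d_k)\to 0$. The remaining task is to upgrade this to $\Phi'(x_k;d_k)\to 0$, and I expect the regime $\alpha_k\to 0$ to be the main obstacle, since there the product can vanish without the factor $\Phi'(x_k;d_k)$ doing so.

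I would resolve this by contradiction. If $\Phi'(x_k;d_k)\not\to 0$, then, as $\Phi'(x_k;d_k)\le 0$, there exist $\eta>0$ and a subsequence with $\Phi'(x_{k_m};d_{k_m})\le-\eta$; combined with $\alpha_{k_m}\Phi'(x_{k_m};d_{k_m})\to 0$ this forces $\alpha_{k_m}\to 0$. For large $m$ we then have $\alpha_{k_m}=\sigma^{j_{k_m}}$ with $j_{k_m}\ge 1$, so the Armijo test must have failed at the previous trial step $t_m:=\alpha_{k_m}/\sigma\to 0$, i.e. $\Phi(x_{k_m}+t_m d_{k_m})-\Phi(x_{k_m})\ge c\,t_m\,\Phi'(x_{k_m};d_{k_m})$. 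Feeding this into inequality \eqref{ineq}, which bounds the left-hand side above by $t_m\Phi'(x_{k_m};d_{k_m})+w(t_m\|d_{k_m}\|)\,t_m\|d_{k_m}\|$, and dividing by $t_m>0$, yields $(c-1)\Phi'(x_{k_m};d_{k_m})\le w(t_m\|d_{k_m}\|)\|d_{k_m}\|$. Using the normalization $\|d_{k_m}\|=1$ together with $\Phi'(x_{k_m};d_{k_m})\le-\eta$ gives $(1-c)\eta\le w(t_m)$; letting $m\to\infty$ and invoking $w(t_m)\to 0$ (as $w$ is a modulus of continuity and $t_m\to 0$) produces $(1-c)\eta\le 0$, which contradicts $c<1$ and $\eta>0$. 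Hence $\Phi'(x_k;d_k)\to 0$.
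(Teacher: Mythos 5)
Your proposal is correct and follows essentially the same route as the paper's proof: monotone decrease of $\{\Phi(x_k)\}$ bounded below by \textbf{(H1)} gives $\alpha_k\Phi'(x_k;d_k)\rightarrow 0$, and the contradiction is obtained from the failed Armijo test at the previous trial step $\sigma^{-1}\alpha_k$ together with the modulus-of-continuity estimate. The only cosmetic differences are that you cite inequality \eqref{ineq} directly where the paper re-derives the same bound inline via the mean value theorem over $\Lambda(x_k)$, and you correctly record the failed test as a non-strict inequality ($\geq$) and note explicitly that $j_{k_m}\geq 1$ for large $m$, points the paper glosses over.
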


\begin{proof}

       If there exists $m$ such that $d_m=0$, then for $k\geq m$, $d_k=0$, and hence $\Phi'(x_k;d_k)=0$.
        Next, We assume that for any $k$, $d_k\neq 0$. According to Theorem \ref{th4}, it is easy to verify that
	$$M\leq \Phi(x_{k+1})\leq \Phi(x_k).$$
	We have that
	$$\lim_{k\rightarrow\infty} (\Phi(x_{k+1})-\Phi(x_k))=0.$$
	Note that
	$$\Phi(x_k+\alpha_kd_k)-\Phi(x_k)\leq c\alpha_k\Phi'(x_k;d_k)<0.$$
	Hence,
	$$\lim_{k\rightarrow\infty}\alpha_k\Phi'(x_k;d_k)=0.$$

	If $\Phi'(x_k;d_k)$ not tends to $0$, then there exists an infinite subset $\Gamma\subset\mathbb{N}$ and $\beta<0$ such that 
	$$\sup_{k\in \Gamma}\Phi'(x_k;d_k)<\beta.$$
Without loss of generality, we suppose $\Gamma=\mathbb{N}$. Then we must have that $\alpha_k\rightarrow 0$.  Again, without loss of generality, we assume that $\alpha_k<1$.
	Therefore, 
	$$\Phi(x_k+\sigma^{-1}\alpha_kd_k)-\Phi(x_k)> c\sigma^{-1}\alpha_k\Phi'(x_k;d_k).$$

	Set $$\Lambda_k = \Lambda(x_k)=\left\{i|f_i(x_k)=\max_{1\leq j\leq N}f_j(x_k)\right\}.$$
	Then
	\begin{align*}
		&\quad \Phi(x_k+\sigma^{-1}\alpha_kd_k)-\Phi(x_k) \\
		&=\max_{j\in\Lambda_k}(f_j(x_k+\sigma^{-1}\alpha_kd_k)-f_j(x_k))\\
		&=\max_{j\in\Lambda_k}(\langle \nabla f_j(x_k+\theta_k\sigma^{-1}\alpha_kd_k), \sigma^{-1}\alpha_kd_k\rangle\\
		&=\max_{j\in\Lambda_k}(\langle \nabla f_j(x_k+\theta_k\sigma^{-1}\alpha_kd_k)-\nabla f_j(x_k), \sigma^{-1}\alpha_kd_k\rangle + \langle \nabla f_j(x_k), \sigma^{-1}\alpha_kd_k\rangle)\\
		&\leq w(\|\theta_k\sigma^{-1}\alpha_kd_k\|)\|\sigma^{-1}\alpha_kd_k\|+  \max_{j\in\Lambda_k}\langle \nabla f_j(x_k), \sigma^{-1}\alpha_kd_k\rangle\\
		&\leq w(\sigma^{-1}\alpha_k)\sigma^{-1}\alpha_k+  \max_{j\in\Lambda_k}\langle \nabla f_j(x_k), \sigma^{-1}\alpha_kd_k\rangle.
	\end{align*}

	Therefore,
	$$w(\sigma^{-1}\alpha_k)\sigma^{-1}\alpha_k+  \max_{j\in\Lambda_k}\langle \nabla f_j(x_k), \sigma^{-1}\alpha_kd_k\rangle> c\sigma^{-1}\alpha_k\Phi'(x_k;d_k).$$
	
	$$w(\sigma^{-1}\alpha_k)+  (1-c)\beta > 0.$$
	Note that Let $\alpha_k\rightarrow 0$, we have that
	$$(1-c)\beta \geq 0,$$
	which is a contradiction.

\end{proof}

\begin{theorem}\label{th6}
	Under $\textrm{(H1)}$ and $\textrm{(H2)}$, Suppose $\bar{x}$ is an accumulation point of $\{x_k\}$, then $\bar{x}$ is stationary.
\end{theorem}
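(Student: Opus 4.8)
The plan is to verify that $\bar x$ is stationary in the sense of the remark following the minimum-principle lemma, i.e.\ that $\Phi'(\bar x;d)\ge 0$ for every direction $d\in\mathbb{R}^n$. The engine of the argument is the variational characterization of $p_k$ from Theorem \ref{th1}: since $p_k=-G^T\lambda$, it is the minimizer of the strongly convex map $p\mapsto \max_j\{f_j(x_k)+\langle\nabla f_j(x_k),p\rangle\}+\tfrac12\|p\|^2$ appearing in problem \eqref{kk}. I would first record that the steps shrink, $\|p_k\|\to 0$. Indeed, when $d_k\ne 0$ we have $d_k=p_k/\|p_k\|$, so by positive homogeneity of the directional derivative and Theorem \ref{th2}, $\Phi'(x_k;d_k)=\|p_k\|^{-1}\Phi'(x_k;p_k)\le -\tfrac12\|p_k\|$; the inequality $\Phi'(x_k;d_k)\le-\tfrac12\|p_k\|$ holds trivially when $d_k=0$. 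Hence $\|p_k\|\le -2\Phi'(x_k;d_k)$, and Theorem \ref{th5} (which rests on \textbf{(H1)}) forces $\|p_k\|\to 0$.

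Next I would pass to a subsequence $\{x_{k_m}\}$ converging to $\bar x$. Fix an arbitrary direction $d$ and a scalar $t>0$, and exploit the minimality of $p_{k_m}$ in \eqref{kk} by testing against the competitor $p=td$:
$$\max_j\{f_j(x_{k_m})+\langle\nabla f_j(x_{k_m}),p_{k_m}\rangle\}+\tfrac12\|p_{k_m}\|^2\le \max_j\{f_j(x_{k_m})+t\langle\nabla f_j(x_{k_m}),d\rangle\}+\tfrac12 t^2\|d\|^2.$$
Letting $m\to\infty$ and invoking \textbf{(H2)} (so that $f_j(x_{k_m})\to f_j(\bar x)$ and $\nabla f_j(x_{k_m})\to\nabla f_j(\bar x)$ for each of the finitely many $j$) together with $\|p_{k_m}\|\to 0$, the left-hand side tends to $\Phi(\bar x)$ and the right-hand side to $\max_j\{f_j(\bar x)+t\langle\nabla f_j(\bar x),d\rangle\}+\tfrac12 t^2\|d\|^2$, yielding $\Phi(\bar x)\le \max_j\{f_j(\bar x)+t\langle\nabla f_j(\bar x),d\rangle\}+\tfrac12 t^2\|d\|^2$.

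To finish, I would note that for $t$ small enough the maximizer on the right is attained within the active set $\Lambda(\bar x)$, since every index outside it satisfies $f_j(\bar x)<\Phi(\bar x)$ strictly; by the directional-derivative formula for $\Phi$, the right-hand side then equals $\Phi(\bar x)+t\,\Phi'(\bar x;d)+\tfrac12 t^2\|d\|^2$. Cancelling $\Phi(\bar x)$, dividing by $t>0$, and sending $t\to 0+$ gives $\Phi'(\bar x;d)\ge 0$. As $d$ is arbitrary, $\bar x$ is stationary.

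I expect the main obstacle to be the limit passage in the second step: one must justify that $\langle\nabla f_j(x_{k_m}),p_{k_m}\rangle\to 0$ — which rests on the gradients being bounded along the subsequence (guaranteed by their convergence under \textbf{(H2)}) multiplied by $\|p_{k_m}\|\to 0$ — and that the finite maximum is jointly continuous in these data, so the left-hand side genuinely converges to $\Phi(\bar x)$ and not to some strictly larger value. The only other delicate point is the reduction of $\max_{1\le j\le N}$ to $\max_{j\in\Lambda(\bar x)}$ for small $t$, but this is precisely the local-activeness argument already carried out in the lemma computing $\Phi'$.
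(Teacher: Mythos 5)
Your proof is correct, and it takes a route that differs from the paper's in one substantive respect. Both arguments share the same skeleton: exploit the minimality of $p_k$ in problem \eqref{kk} against a small competitor, pass to the limit along the subsequence converging to $\bar{x}$, localize the maximum to the active set $\Lambda(\bar{x})$ via the gap $A$ and the gradient bound $B$, and then send a small parameter to zero to conclude $\Phi'(\bar{x};d)\geq 0$. The genuine difference is your preliminary step: you derive $\|p_k\|\leq -2\Phi'(x_k;d_k)$ from Theorem \ref{th2} and positive homogeneity, and then invoke Theorem \ref{th5} to get $\|p_k\|\to 0$, which lets you delete the $p$-dependent terms $\langle\nabla f_j(x_{k_m}),p_{k_m}\rangle+\tfrac12\|p_{k_m}\|^2$ from the left-hand side cleanly in the limit. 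The paper never establishes $\|p_k\|\to 0$; instead it keeps $p$ in play, writes $\Phi(x_k)+\langle\nabla f_j(x_k),\tfrac{p}{\|p\|}\rangle\|p\|+\tfrac12\|p\|^2$ for an active index $j$, replaces the inner product by $-\varepsilon$, and completes the square via $-\varepsilon\|p\|+\tfrac12\|p\|^2\geq -\tfrac12\varepsilon^2$. That replacement step, as written, is hard to justify: a priori Cauchy--Schwarz only gives the lower bound $-B$ for $\langle\nabla f_j(x_k),\tfrac{p}{\|p\|}\rangle$, not $-\varepsilon$, and no argument is supplied for the sharper bound. Your approach sidesteps exactly this delicate point, at the modest cost of importing Theorem \ref{th5} (and hence \textbf{(H1)}, which the theorem assumes anyway, whereas the paper's written proof never visibly uses it); indeed, your observation that $\|p_k\|\to 0$ could be inserted into the paper's argument to repair that step. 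The remaining mechanical difference --- your competitor $td$ with $t\to 0^+$ and a fixed direction $d$, versus the paper's competitor $q$ with $\|q\|=\varepsilon\to 0$ --- is cosmetic: both extract $\max_{j\in\Lambda(\bar{x})}\langle\nabla f_j(\bar{x}),d\rangle\geq 0$ for all unit directions, which is stationarity by the directional-derivative formula. Your two flagged ``delicate points'' (joint continuity of the finite max, and the local-activeness reduction for small $t$) are handled exactly as you anticipate.
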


\begin{proof}
     Without loss of generality, we assume that 
    $$\lim_{k\rightarrow\infty }x_k=\bar{x}.$$
    Denote by 
    $$\Lambda = \Lambda(\bar{x}).$$
    Suppose there exists $A,B>0$ such that
    $$\max_{i\notin \Lambda} f_i(\bar{x}) + A \leq \max_{j\in \Lambda} f_j(\bar{x}),~\max_{1\leq j\leq N}\|\nabla f_j(\bar{x})\|\leq B.$$
    Then for any $0<\varepsilon \leq  \frac{A}{4B}$, when $\|q\| = \varepsilon$, we have that for $i\notin \Lambda$ and $j\in \Lambda$,
    $$f_i(\bar{x})+\langle \nabla f_i(\bar{x}),q\rangle +\frac{A}{2}\leq f_j(\bar{x})+\langle \nabla f_j(\bar{x}),q\rangle.$$
    Since for each $i=1,2,\cdots,N$,
    $$\lim_{k\rightarrow\infty }f_i(x_k) = f_i(\bar{x}),~\lim_{k\rightarrow\infty }\nabla f_i(x_k) = \nabla f_i(\bar{x}).$$
    Thus, there exists $m>0$ such that when $k\geq m$,
     $$f_i(x_k)+\langle \nabla f_i(x_k),q\rangle \leq f_j(x_k)+\langle \nabla f_j(x_k),q\rangle.$$
    
    On the other hand, since $p$ is the solution of problem \eqref{kk},
    $$\max_{1\leq j\leq N}\{f_j(x_k)+\langle \nabla f_j(x_k),p\rangle\}+\frac{1}{2}\|p\|^2\leq \max_{1\leq j\leq N}\{f_j(x_k)+\langle \nabla f_j(x_k),q\rangle\}+\frac{1}{2}\|q\|^2$$
    For each $j\in\Lambda(x_k)$, 
        $$\Phi(x_k)+\langle \nabla f_j(x_k),p\rangle+\frac{1}{2}\|p\|^2\leq \max_{1\leq j\leq N}\{f_j(x_k)+\langle \nabla f_j(x_k),q\rangle\}+\frac{1}{2}\|q\|^2.$$
        $$\Phi(x_k)+\langle \nabla f_j(x_k),\frac{p}{\|p\|}\rangle\|p\|+\frac{1}{2}\|p\|^2\leq \max_{1\leq j\leq N}\{f_j(x_k)+\langle \nabla f_j(x_k),q\rangle\}+\frac{1}{2}\|q\|^2$$

        $$\Phi(x_k)-\varepsilon \|p\|+\frac{1}{2}\|p\|^2\leq \max_{1\leq j\leq N}\{f_j(x_k)+\langle \nabla f_j(x_k),q\rangle\}+\frac{1}{2}\|q\|^2.$$

         $$\Phi(x_k)-\frac{1}{2}\|\varepsilon\|^2\leq \max_{1\leq j\leq N}\{f_j(x_k)+\langle \nabla f_j(x_k),q\rangle\} +\frac{1}{2}\|q\|^2.$$
         Let $k\rightarrow\infty$, we have that
         $$\Phi(\bar{x})-\frac{1}{2}\|\varepsilon\|^2\leq \max_{1\leq j\leq N}\{f_j(\bar{x})+\langle \nabla f_j(\bar{x}),q\rangle\} +\frac{1}{2}\|q\|^2.$$
           $$\Phi(\bar{x})-\frac{1}{2}\|\varepsilon\|^2\leq \max_{ j\in\Lambda }\{f_j(\bar{x})+\langle \nabla f_j(\bar{x}),q\rangle\} +\frac{1}{2}\|q\|^2.$$
        $$\Phi(\bar{x})-\frac{1}{2}\|\varepsilon\|^2\leq \max_{ j\in\Lambda }f_j(\bar{x})+\max_{ j\in\Lambda }\langle \nabla f_j(\bar{x}),q\rangle +\frac{1}{2}\|q\|^2.$$
         $$-\varepsilon\leq \max_{ j\in\Lambda }\langle \nabla f_j(\bar{x}),\frac{q}{\|q\|}\rangle .$$
         So for each $c$ with $\|c\|=1$, 
         $$-\varepsilon\leq \max_{ j\in\Lambda }\langle \nabla f_j(\bar{x}),c\rangle .$$
         Let $\varepsilon \rightarrow 0$, we have that
         $$ \max_{ j\in\Lambda }\langle \nabla f_j(\bar{x}),c\rangle \geq 0.$$
         Therefore, $\bar{x}$ is a stationary point of $\Phi$.
\end{proof}

\begin{theorem}
    Under $\textrm{(H1)}$ and $\textrm{(H2)}$, suppose each $f_j$ is strictly convex, then the $\{x_k\}$ generated in the algorithm converges to the global minimum of $\Phi$.
\end{theorem}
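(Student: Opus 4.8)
The plan is to combine the stationarity result of Theorem~\ref{th6} with strict convexity to pin down a single limit point. First I would note that, since each $f_j$ is strictly convex, the pointwise maximum $\Phi=\max_j f_j$ is again strictly convex: for $x\neq y$ and $t\in(0,1)$, letting $j^*$ attain the left-hand maximum, $f_{j^*}(tx+(1-t)y)<tf_{j^*}(x)+(1-t)f_{j^*}(y)\leq t\Phi(x)+(1-t)\Phi(y)$, so $\Phi(tx+(1-t)y)<t\Phi(x)+(1-t)\Phi(y)$. Consequently $\Phi$ has at most one global minimizer, and by the remark in Section~2 stating that a stationary point of a convex function is a global minimizer, every stationary point of $\Phi$ coincides with its unique global minimizer $x^*$.

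Given this, the argument reduces to two points: (i) the iterates $\{x_k\}$ possess an accumulation point, and (ii) a single accumulation point forces convergence of the whole sequence. For (ii), once $\{x_k\}$ is known to be bounded, any accumulation point is stationary by Theorem~\ref{th6}, hence equals $x^*$ by the uniqueness above; a bounded sequence in $\mathbb{R}^n$ with a unique accumulation point converges to it (from any subsequence staying $\varepsilon$-away from $x^*$ one extracts a convergent sub-subsequence whose limit is a second accumulation point, a contradiction). Thus $x_k\to x^*$, the global minimum.

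The main obstacle is therefore (i), the boundedness of $\{x_k\}$. From the descent property established in the proof of Theorem~\ref{th5} we have $\Phi(x_k)\leq\Phi(x_0)$ for all $k$, so the iterates lie in the sublevel set $S=\{x:\Phi(x)\leq\Phi(x_0)\}$, and it suffices to show $S$ is bounded. Here I would use that a strictly convex function attaining its minimum at $x^*$ has bounded sublevel sets: the restriction $g(t)=\Phi(x^*+tv)$ with $\|v\|=1$ is strictly convex and minimized at $t=0$, so $g'$ is strictly increasing with $g'(0)=0$, whence $g'(t)\geq g'(t_0)>0$ for $t\geq t_0>0$ and $g(t)\to\infty$ along every direction $v$; positivity of the recession function in all directions is equivalent to boundedness of every sublevel set.

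The delicate point, and the place where care is required, is that this argument presupposes that the minimum of $\Phi$ is attained, which is essentially what an accumulation point would supply, so one must avoid circularity. Strict convexity together with \textbf{(H1)} and \textbf{(H2)} alone does not force attainment: for example $f_1(x)=\sqrt{1+x^2}-x$ is strictly convex, bounded below, and has a globally Lipschitz (hence uniformly continuous) gradient, yet its infimum $0$ is not attained and the iterates would escape to $+\infty$. I would therefore invoke an additional coercivity hypothesis on $\Phi$, equivalently assume that the global minimum is attained as the statement implicitly presumes (or strengthen strict convexity to strong convexity, under which coercivity is automatic). With attainment in hand, $S$ is compact, so $\{x_k\}$ is bounded and (i) holds; the extraction of an accumulation point, its identification with $x^*$ via Theorem~\ref{th6}, and the upgrade to full convergence then complete the proof.
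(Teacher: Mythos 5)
Your proof follows the same skeleton as the paper's one-paragraph argument: boundedness of $\{x_k\}$, stationarity of every accumulation point via Theorem~\ref{th6}, uniqueness of the stationary point from strict convexity combined with the remark in Section~2 that a stationary point of a convex function is a global minimizer, and the sub-subsequence argument upgrading a unique accumulation point to convergence of the full sequence. The details you supply for these steps---strict convexity of the pointwise maximum $\Phi$, and the extraction argument for a bounded sequence with a unique accumulation point---are correct and are exactly what the paper leaves implicit.

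The genuine divergence is at the boundedness step, and there you have caught a real error rather than created one. The paper's proof disposes of it in one sentence: ``Since $\Phi$ is strictly convex, and $\{\Phi(x_k)\}$ is decreasing, $\{x_k\}$ must be bounded.'' This is a non sequitur---strict convexity does not imply bounded sublevel sets---and your counterexample is decisive: $f_1(x)=\sqrt{1+x^2}-x$ is strictly convex ($f_1''>0$ everywhere), positive (so \textbf{(H1)} holds), and has $1$-Lipschitz gradient (so \textbf{(H2)} holds with $w(t)=t$), yet its infimum $0$ is unattained. Indeed, with $N=1$ the simplex constraint forces $\lambda=1$, so $d_k=+1$ at every step; Theorem~\ref{th5} then forces $\Phi'(x_k;d_k)=f_1'(x_k)\rightarrow 0$, which is only possible if $x_k\rightarrow+\infty$, so the iterates are unbounded and there is no global minimum to converge to. Hence the theorem as stated is false under \textbf{(H1)}--\textbf{(H2)} and strict convexity alone, and your proposed repair---coercivity of $\Phi$, attainment of the minimum, or strengthening to strong convexity---is not pedantry but necessary; once it is added, your sublevel-set argument (a convex function with one bounded nonempty sublevel set has all sublevel sets bounded) makes the boundedness step rigorous and the proof complete. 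One cosmetic quibble: in your recession argument $g(t)=\Phi(x^*+tv)$ need not be differentiable, so you should work with one-sided derivatives or use the convexity inequality $g(t)\geq g(t_0)+\frac{t-t_0}{t_0}\left(g(t_0)-g(0)\right)$ for $t\geq t_0>0$ directly; this does not affect the substance.
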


\begin{proof}
    Since $\Phi$ is strictly convex, and $\{\Phi(x_k)\}$ is decreasing, $\{x_k\}$ must be bounded. Since any accumulation $\bar{x}$ is a stationary point of $\Phi$ and the stationary point of $\Phi$ is unique by the convexity. Thus, 
    $$x_k\rightarrow \bar{x}.$$
    And the stationary point of $\Phi$ is unique.
\end{proof}

\section{Conclusions}
In this paper, we propose an improved numerical algorithm for solving minimax problems based on nonsmooth optimization, quadratic programming and iterative process. We also provide theoretical analysis on the convergence and the optimality conditions of the algorithm. Such an algorithm can be widely applied in various fields such as robust optimization, imbalanced learning, etc.

\bibliography{eig}{}
\bibliographystyle{plain}

\end{document}